\newcommand{\ignore}[1]{}
\def\vor{\text{Vor}}
\def\C{\mathcal{C}}
\def\O{\mathcal{O}}
\def\I{\mathcal{I}}
\def\I{\mathcal{I}}
\def\T{\mathcal{T}}
\def\orad{\mathcal{O}_{D}}
\def\dG{\mathbb{G}}
\def\dV{\mathbb{V}}
\def\dE{\mathbb{E}}
\def\dR{\mathbb{R}}
\newcommand{\mpl}{motion planning\xspace}
\newcommand{\cs}{C-space\xspace}
\newcommand{\Cpp}{C\raise.08ex\hbox{\tt ++}\xspace}
\newcommand{\lp}{local connector\xspace}
\newtheorem{observation}{Observation}
\newcommand{\textVersion}[2]
{\ifthenelse{\boolean{ICRA} }{#1}{}\ifthenelse{\boolean{ARXIV}}{#2}{}}
  \renewenvironment{thebibliography}[1]{%
    \begin{oldthebibliography}{#1}%
      \footnotesize
      \setlength{\parskip}{0ex}%
      \setlength{\itemsep}{0ex}%
  }%
  {%
    \end{oldthebibliography}%
  }
\def\mypart#1#2{%
  \par\break 
  \vskip .7\vsize 
  \refstepcounter{part}
  {\centering\Huge Part \thepart.\par \\
   \centering #1
  }%
  \vskip .1\vsize 

  #2
  \vfill\break 
}
\def\mypart#1#2
\newcommand{\argmin}{\operatornamewithlimits{argmin}}
  \def\marrow{{\raggedright\footnotesize $\longleftarrow$}}
  \def\kiril#1{\textcolor{blue}{{\sc Kiril: }{\marrow\sf #1}}}
  \def\kiril#1{}
  \def\oren#1{}
\begin{document}

\pagestyle{plain}
\pagenumbering{arabic}

\title{Finding a needle in an exponential haystack: Discrete RRT for exploration of implicit roadmaps in multi-robot motion planning\thanks{This work has been supported in part by the 7th
						Framework Programme for Research of the European Commission, under
						FET-Open grant number 255827 (CGL---Computational Geometry
						Learning), by the Israel Science Foundation (grant no. 1102/11),
						by the German-Israeli Foundation (grant no. 1150-82.6/2011), and
						by the Hermann Minkowski--Minerva Center for Geometry at Tel Aviv
						University.}}

\titlerunning{Finding a Needle in an Exponential Haystack}

\newcommand*\samethanks[1][\value{footnote}]{\footnotemark[#1]}

\author{Kiril Solovey\thanks{K. Solovey and O. Salzman contributed equally to this paper.} \and Oren Salzman\samethanks \and Dan Halperin}

\institute{Blavatnik School of Computer Science,
			Tel-Aviv University, Israel}

\maketitle

\begin{abstract}
We present a sampling-based framework for multi-robot motion planning which combines an implicit representation of a roadmap with a novel approach for pathfinding in geometrically embedded graphs tailored for our setting.
Our pathfinding algorithm, \emph{discrete-RRT} (dRRT), is an adaptation of the celebrated RRT 
algorithm for the discrete case of a graph, and it enables a rapid exploration of the 
high-dimensional configuration space by carefully walking through an implicit representation of a 
tensor product of roadmaps for the individual robots.
We demonstrate our approach experimentally on scenarios
of up to 60 degrees of freedom
where our algorithm is faster by a factor of at least
ten
when compared to existing algorithms that we are aware of.
\end{abstract}

\section{Introduction}
    \emph{Multi-robot motion planning} is a fundamental problem in robotics and has been extensively studied. In this work we are concerned with finding paths for a group of robots, operating in the same workspace, moving from start to target positions while avoiding collisions with obstacles as well as with each other.
We consider the continuous formulation of the problem, where the robots and obstacles are geometric entities and the robots operate in a configuration space, e.g., $\dR^d$ (as opposed to the discrete variant, sometimes called the \textVersion
{
\emph{pebble motion} problem~\cite{LB11},
}
{
\emph{pebble motion} problem~\cite{ampp-ltafpm, gh-mcpm,k-cpmg, LB11},
}
where the robots move on a graph).
Moreover, we assume that each robot has its own start and target positions, as opposed to the unlabeled case (see, e.g.,~\cite{abhs-emrmp13,kh-pim05,sh-kcmr,TMK13}).

    \subsection{Previous work}
        We assume familiarity with the basic terminology of motion planning. For background, see, e.g., \cite{clhbkt-prmp,l-pa}. Initial work on motion planning aimed to develop \emph{complete} algorithms, which guarantee to find a solution when one exists or report that none exists otherwise. Such algorithms for the multi-robot case exist~\cite{ss-pm3,ss-cmp91,y-cms84} yet are exponential in the number of robots. The exponential running time,
\textVersion
{which may be unavoidable~\cite{hss-cmpmio, SY84}}
{which may be unavoidable~\cite{hss-cmpmio, SY84}}
can be attributed to the high number of \emph{degrees of freedom} (\emph{dof})---the sum of the dofs of the individual robots.
\textVersion
{
Several techniques to reduce the number of dofs exist~\cite{avbsv-mpfmr, bslm-cppmr}.
Alternatively, decoupled planners (see, e.g., \cite{lls-mpcmr, bo-pmpmr}) provide an efficient approach for a restricted set of problems.
}
{

        For two or three robots, the number of dofs may be slightly reduced~\cite{avbsv-mpfmr}, by constructing a path where the robots move while maintaining contact with each other.
A more general approach to reduce the number of dofs was suggested by van den Berg et al.~\cite{bslm-cppmr}.
In their work, the motion-planning problem is decomposed into subproblems, each consisting of a subset of robots, where every subproblem can be solved separately and the results can be combined into a solution for the original problem.

        Decoupled planners are an alternative to complete planners trading completeness for efficiency. Typically, decoupled planners solve separate problems for individual robots and combine the individual solutions into a global solution (see, e.g., \cite{bo-pmpmr,lls-mpcmr}). Although efficient in some cases, the approach usually works only for a restricted set of problems.
}

        The introduction of \emph{sampling-based} algorithms such as the \emph{probabilistic  roadmap \hyphenation{road-map} method} (PRM)~\cite{kslo-prm}, the \emph{rapidly-exploring random trees}~(RRT)~\cite{kl-rrtc}
and their many variants, had a significant impact on the field of \mpl due to their efficiency, simplicity and applicability to a wide range of problems.
Sampling-based algorithms attempt to capture the connectivity of the
\emph{configuration space} (\cs) by sampling collision-free configurations and constructing a \emph{roadmap}---a graph data structure where the free configurations are vertices and the edges represent collision-free paths between nearby configurations.
Although these algorithms are not complete, most of them are \emph{probabilistically complete}, that is, they are guaranteed to find a solution, if one exists, given a sufficient amount of time.
        Recently, Karaman and Frazzoli~\cite{KF11} introduced several variants of these algorithms such that, with high probability they produce paths that are \emph{asymptotically optimal} with respect to some quality measure.

        Sampling-based algorithms can be easily extended to the multi-robot case by considering the fleet of robots as one composite robot~\cite{sl-upp}. Such a naive approach suffers from inefficiency as it overlooks aspects that are unique to the multi-robot problem.
        More tailor-made sampling-based techniques have been proposed for the multi-robot case~\cite{hh-hmp, shh-mms2, sh-kcmr}. Particularly relevant to our efforts is the work of \v{S}vestka and Overmars~\cite{so-cppmr} who suggested to construct a composite roadmap which is a Cartesian product of roadmaps of the individual robots. Due to the exponential nature of the resulting roadmap, this technique is only applicable to problems that involve a modest number of robots.
        A recent work by Wagner et al.~\cite{wkh-ppp} suggests that the composite roadmap does not necessarily have to be explicitly represented. Instead, they maintain an implicitly represented composite roadmap, and apply their M* algorithm~\cite{wc-mstar} to efficiently retrieve paths, while minimizing the explored portion of the roadmap.
The resulting technique is able to cope with a large number of robots, for certain types of scenarios. Additional information on these two approaches is provided in Section~\ref{sec:c_roadmaps} below.

    \subsection{Contribution}
        We present a sampling-based algorithm for the multi-robot motion-planning problem called \emph{multi-robot discrete RRT} (MRdRRT). Similar to the approach of Wagner et al.~\cite{wkh-ppp}, we maintain an implicit representation of the composite roadmap. We propose an alternative, highly efficient, technique for pathfinding in the roadmap, which can cope with scenarios that involve tight coupling of the robots. Our new approach, which we call dRRT, is an adaptation of the celebrated RRT algorithm~\cite{kl-rrtc} for the discrete case of a graph, embedded in Euclidean space\footnote{We mention that we are not the first to consider RRTs in discrete domains. Branicky et al.~\cite{BCLM03} applied the RRT algorithm to a discrete graph. However, a key difference between the approaches is that we assume that the graph is \emph{geometrically embedded}, hence we use \emph{random points} as samples while they use nodes of the graph as samples. Additionally, their technique requires that all the neighbors of a visited vertex will be considered---a costly operation in our setting, as mentioned above.}.
dRRT traverses a composite roadmap that may have exponentially many neighbors (exponential in the number of robots that need to be coordinated).
The efficient traversal is achieved by retrieving only partial information of the explored roadmap.
Specifically, it considers a single neighbor of a visited vertex at each step. dRRT rapidly explores the \cs represented by the implicit graph. Integrating the implicit representation of the roadmap allows us to solve multi-robot problems while exploring only a small portion of the \cs.

        We demonstrate the capabilities of MRdRRT on the setting of polyhedral robots translating and rotating in space amidst polyhedral obstacles.
We provide experimental results on several challenging scenarios, where MRdRRT is faster by a factor of at least ten when compared to existing algorithms that we are aware of.
We show that we manage to solve problems of up to 60 dofs for highly coupled scenarios (Figure~\ref{fig:3d_scenarios}).

        The organization of this paper is as follows.
In Section~\ref{sec:c_roadmaps} we elaborate on two sampling-based multi-robot motion planning algorithms, namely the composite roadmap approach by \v{S}vestka and Overmars~\cite{so-cppmr} and the work on subdimensional expansion and M* by Wagner et al.~\cite{wc-mstar, wkh-ppp}.
In Section~\ref{sec:d_rrt} we introduce the dRRT algorithm. For clarity of exposition, we first describe it as a general pathfinding algorithm for geometrically embedded graphs. In the following section (Section~\ref{sec:implementation}) we describe the MRdRRT method where dRRT is used in the setting of multi-robot motion-planning problem for the exploration of the implicitly represented composite roadmaps.
We show in Section~\ref{sec:experimental_results} experimental results for the algorithm on different scenarios and conclude the paper in
Section~\ref{sec:discussion} with possible future research directions.

\begin{figure}[th!]
  \centering
  \subfloat
   [\sf Twisty]
   {
    \includegraphics[width=0.45\textwidth]{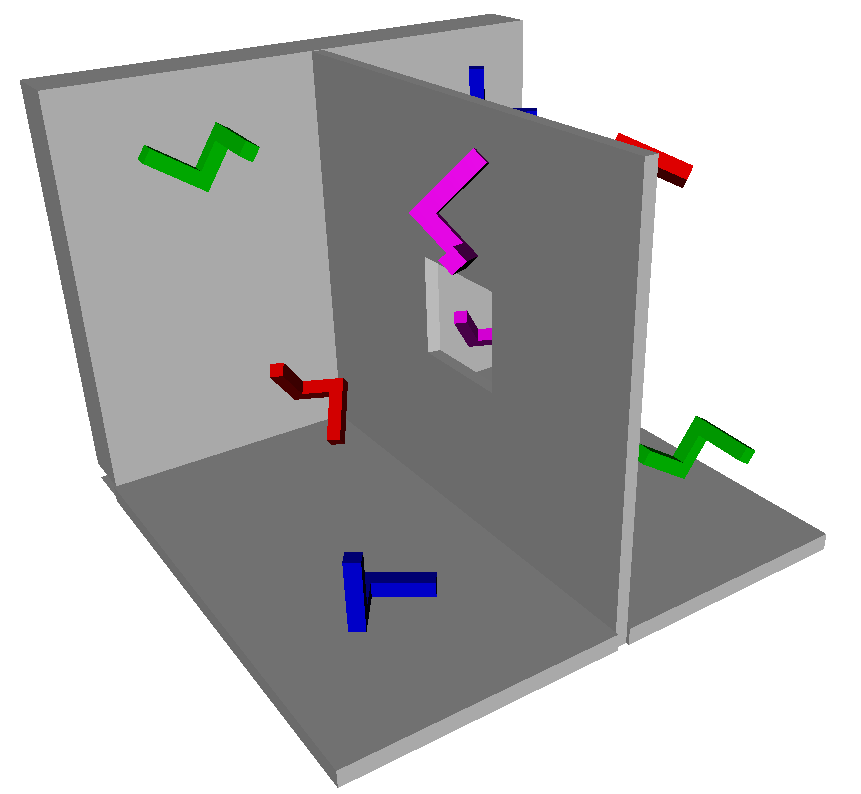}
   }
   \hspace{3mm}
   \subfloat
   [\sf Abstract ]
   {
    \includegraphics[width=0.45\textwidth]{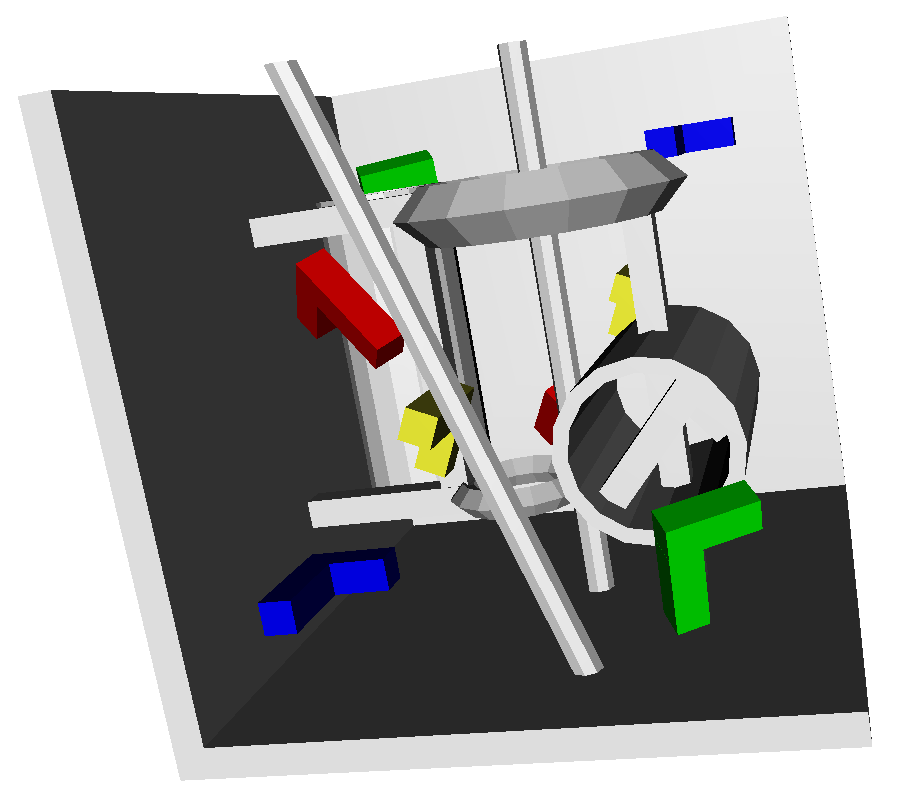}
   }
   \hspace{3mm}
  \subfloat
   [\sf Cubicles ]
   {
    \includegraphics[width=0.45\textwidth]{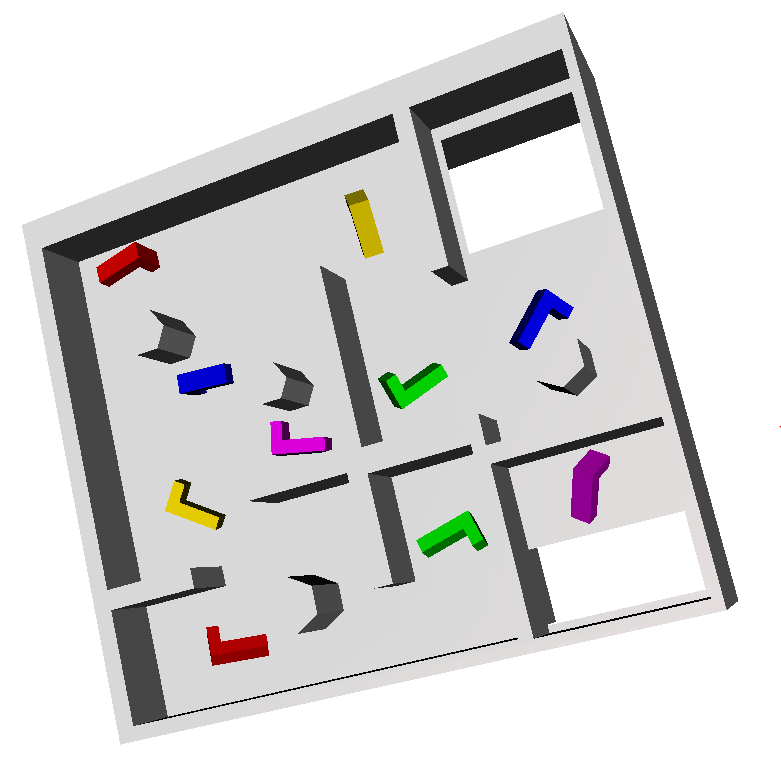}
   }
   \hspace{3mm}
  \subfloat
   [\sf Home ]
   {
   	\includegraphics[width=0.45\textwidth]{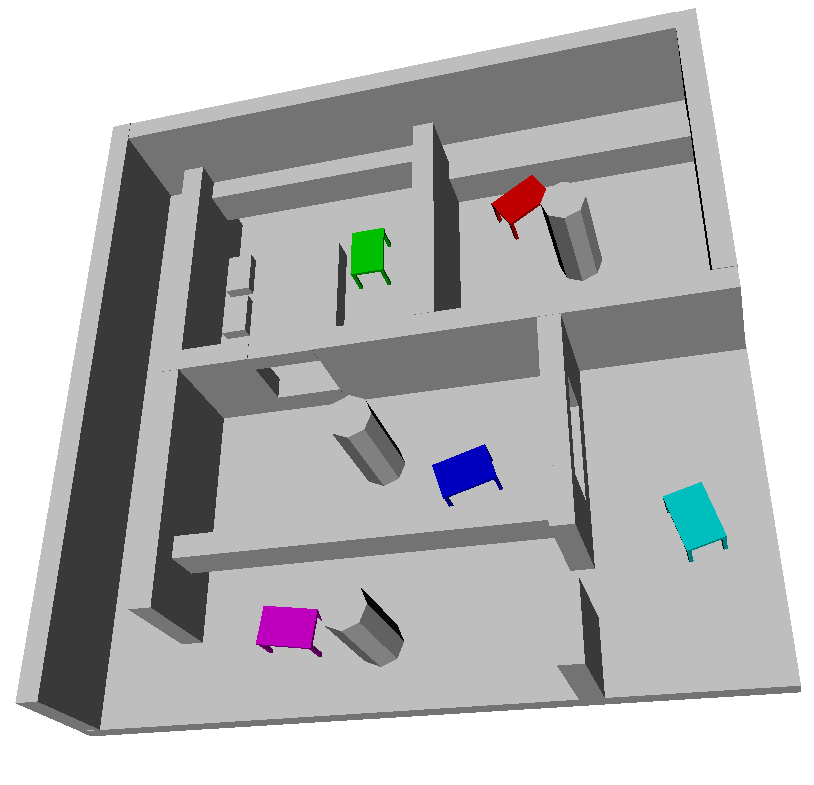}
   }
  \caption{\sf 	3D environments with robots that are allowed to rotate and translate (6DOFs). In scenarios (a),(b),(c) robots of the same color need to exchange positions. (a) Twisty scenario with 8 corkscrew-shaped robots, in a room with a barrier. 
(b) Abstract scenario with 8 L-shaped robots.
(c) Cubicles scenario with 10 L-shaped robots.
(d) Home scenario with 5 table-shaped robots that are placed in different rooms. The goal is to change rooms in a clockwise order.
The scenario were constructed using meshes that are provided by the
  							Open Motion Planning Library~\cite{ompl} (OMPL 0.10.2)
  							distribution.}
  \vspace {-5mm}
  \label{fig:3d_scenarios}
\end{figure}

\section{Composite roadmaps for multi-robot motion planning}
\label{sec:c_roadmaps}
We describe the composite roadmap approach 
introduced by \v{S}vestka and Overmars~\cite{so-cppmr}.
Here, a Cartesian product of PRM roadmaps of individual robots is considered as a means of devising a roadmap for the entire fleet of robots. 
However, since they consider an explicit construction of this roadmap, their technique is applicable to scenarios that involve only a small number of robots.
To overcome this, Wagner et al. suggest~\cite{wc-mstar, wkh-ppp} to represent the roadmap \emph{implicitly} and describe a novel algorithm to find paths on this implicit graph.

Let $r_1,\ldots,r_m$ be $m$ robots operating in a workspace~$W$
with start and target configurations $s_i,t_i$. 
We wish to find paths for every robot from start to target, while avoiding  collision with obstacles as well as with the other robots.
Let $G_i=(V_i,E_i)$ be a PRM roadmap for $r_i$, $|V_i|=n$, and let $k$ denote the maximal degree of a vertex in any $G_i$.
In addition, assume that $s_i,t_i \in V_i$, and that $s_i,t_i$ reside in the same connected component of $G_i$. Given such a collection of roadmaps $G_1,\ldots, G_m$ a composite roadmap can be defined in two different ways---one is the result of a \emph{Cartesian product} of the individual roadmaps while in the other a \emph{tensor product} is used~\cite{wiki-gp}.

The \emph{composite roadmap} $\dG=(\dV, \dE)$ is defined as follows. The vertices $\dV$ represent all combinations of 
collision-free placements of the $m$ robots. Formally, a set of $m$ robot configurations $C=(v_1,\ldots,v_m)$ is a vertex of $\dG$ if for every $i$, $v_i\in V_i$, and in addition, when every robot $r_i$ is placed in $v_i$ the robots are pairwise collision-free. The Cartesian and tensor products differ in the type of edges in the resulting roadmap.
If the Cartesian product is used, then $(C,C')\in \dE$, where $C=(v_1,\ldots,v_m), C'\in(v'_1,\ldots,v'_m)$, if there exists $i$ such that $(v_i,v'_i)\in E_i$, for every $j\neq i$ it holds that $v_j=v'_j$, 
and $r_i$ does not collide with the other robots stationed at $v_j=v'_j$ while moving from $v_i$ to $v'_i$.
A tensor product generates many more edges. Specifically, $(C,C')\in\dE$ if $(v_i,v'_i)\in E_i$ for every~$i$, and the robots remain collision-free while moving on the respective single-graph edges.%
\textVersion
{
}
{
\footnote{There is wide consensus on the term {\it tensor product}
 as defined here, and less so on the term {\it Cartesian product}. As the latter has already been used before in the context of motion planning, we will keep using it here as well.}}
\vspace{5pt}

\noindent\textbf{Remark.}
\textVersion
{
Throughout this work we only consider the \emph{tensor} product composite roadmap.}
{
Throughout this work, unless stated otherwise, we refer to the \emph{tensor} product composite roadmap.}\vspace{5pt}

\textVersion
{
Given a composite roadmap $\dG$, it is left to find such a path between $S=(s_1,\ldots,s_m)$ and $T=(t_1,\ldots,t_m)$. 
Unfortunately, the number of vertices of $\dG$ alone may reach $O(n^m)$. 
One may consider the A* algorithm~\cite{p-hiss}
since it may not need to traverse all the vertices of graph. 
Yet, even A* needs to consider all the neighbors of a visited vertex, which in our case is $O(k^m)$ and may turn out to be prohibitively~large.
}
{
Note that by the  definition of $G_i$ and $\dG$ it holds that $S,T\in\dV$, where $S=(s_1,\ldots,s_m),T=(t_1,\ldots,t_m)$. The following observation immediately follows (for both product types).

\begin{observation}
Let $C_1,\ldots,C_{h}$ be a sequence of $h$ vertices of $\dG$  such that
$S=C_1, \ T=C_{h}$ and
for every two consecutive vertices $(C_{i},C_{i+1})\in \dE$.
Then, there exists a path for the robots from $S$ to $T$.
\end{observation}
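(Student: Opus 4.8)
The plan is to unwind the definitions of $\dV$ and $\dE$ and to exhibit an explicit continuous motion of the fleet, obtained by concatenating the motions associated with each composite edge $(C_i,C_{i+1})$.

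First I would fix notation: write $C_i=(v^i_1,\ldots,v^i_m)$ for each $i\in\{1,\ldots,h\}$, so that $v^1_j=s_j$ and $v^h_j=t_j$ for every robot $r_j$. By the definition of a composite-roadmap edge in the tensor product, for each consecutive pair $(C_i,C_{i+1})\in\dE$ there is, for every robot $r_j$, a collision-free single-robot path $\gamma^i_j:[0,1]\to W$ that traces the roadmap edge $(v^i_j,v^{i+1}_j)\in E_j$ (or stays at $v^i_j=v^{i+1}_j$), and moreover, when all robots execute these paths synchronously in a common parameter $t\in[0,1]$, they are pairwise collision-free throughout. (For the Cartesian product the situation is only simpler: exactly one robot moves along each composite edge.)

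Next I would define the combined motion $\Gamma:[0,h-1]\to W^m$ by setting, for $t\in[i-1,i]$, the configuration of $r_j$ to be $\gamma^i_j\big(t-(i-1)\big)$. This is well defined and continuous: at each integer time $t=i$ the two adjacent pieces agree, since both evaluate to the configuration $v^i_j$ of $C_i$, which is a vertex of $\dG$ and hence itself a collision-free placement of the whole fleet. On each closed interval $[i-1,i]$ the motion is collision-free among the robots by the edge property, and it avoids the workspace obstacles because each $\gamma^i_j$ traces a collision-free edge of $G_j$. Restricting $\Gamma$ to the $j$-th coordinate yields a path for $r_j$ from $s_j$ to $t_j$, and the whole family is mutually collision-free, which is exactly the claim.

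The argument is essentially bookkeeping; the only point that deserves care is the synchronization issue — a composite edge only certifies collision-freeness for a specific (synchronous) traversal of the underlying single-robot edges, so the concatenation must respect this common parametrization, and one must note that a robot which does not move along some edge $(C_i,C_{i+1})$ is permitted to \emph{wait} there (harmless against static obstacles, and covered by the inter-robot collision-free condition already built into the definition of $\dE$). Beyond that, the statement follows directly from the definitions, which is why it is recorded as an observation rather than a theorem.
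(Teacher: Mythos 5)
Your argument is correct and coincides with what the paper has in mind: the paper offers no explicit proof, stating that the observation ``immediately follows'' from the definitions of $G_i$ and $\dG$, and your write-up is exactly that definitional unwinding (concatenating the per-edge synchronous motions, with agreement at the shared vertices $C_i$), including the right remark about waiting robots in the Cartesian case. The only cosmetic quibble is that the paths live in the robots' configuration spaces rather than the workspace $W$, which does not affect the argument.
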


Thus, given a composite roadmap $\dG$, it is left to find such a path between $S$ and $T$. Unfortunately, standard pathfinding techniques, which require the full representation of the graph, cannot be used since the number of vertices of $\dG$ alone may reach $O(n^m)$. 
One may consider the A* algorithm~\cite{p-hiss}%
\textVersion{}{, or its variants, as appropriate for the task,} since it may not need to traverse all the vertices of graph. 
A central property of A* is that it needs to consider all the neighbors of a visited vertex in order to guarantee that it will find a path eventually. Alas, in our setting, this turns out to be a significant drawback, since the number of neighbors of every vertex is $O(k^m)$.}

Wagner et al. propose an adaptation of A* to the case of a composite roadmap called M*~\cite{wc-mstar}.
Their approach exploits the observation that only the motion of some robots has to be coupled in typical scenarios.
Thus, planning in the joint \cs is only required for robots that have to be coupled, while the motion of the rest of the robots can be planned individually. Hence, their method dynamically explores low-dimensional search spaces embedded in the full \cs, instead of the joint high-dimensional \cs.
This technique is highly effective for scenarios with a low degree of coupling, and can cope with large fleets of robots in such settings. However, when the degree of coupling increases, we observed sharp increase in the running time of this algorithm, as it has to consider many neighbors of a visited vertex. 


\section{Discrete RRT} \label{sec:d_rrt}
    We describe a technique which we call \emph{discrete RRT} (dRRT) for pathfinding in implicit graphs that are embedded in a Euclidean space. 
    For clarity of exposition, we first describe dRRT without the technicalities related to motion planning. We add these details in the subsequent section.
    As the name suggests, dRRT is an adaptation of the RRT algorithm~\cite{kl-rrtc} for the purpose of exploring discrete geometrically-embedded graphs, instead of a continuous space.

    Since the graph serves as an approximation of some relevant portion of the Euclidean space, traversal of the graph can be viewed as a process of exploring the subspace.
    The dRRT algorithm rapidly explores the graph by biasing the search towards vertices embedded in unexplored regions of the space.

    Let $G=(V,E)$ be a graph where every $v\in V$ is embedded in a point in Euclidean space $\dR^d$ and every edge $(v,v')\in E$ is a line segment connecting the points.
    Given two vertices $s,t\in V$, dRRT searches for a path in~$G$ from~$s$ to~$t$.
    For simplicity, assume that the graph is embedded in $[0,1]^d$.

    Similarly to its continuous counterpart, dRRT grows a tree rooted in $s$ and attempts to connect it to $t$ to form a path from $s$ to $t$. As in RRT, the growth of the tree is achieved by extending it towards random samples in $[0,1]^d$. In our case though, vertices and edges that are added to the trees are taken from $G$, and we do not generate new vertices and edges along the way.

As $G$ is represented implicitly, the algorithm uses an oracle to retrieve information regarding neighbors of visited vertices. We first describe this oracle and then proceed with a full description of the dRRT algorithm. Finally, we show that this technique is \emph{probabilistically complete}.

    \subsection{Oracle to query the implicit graph}
In order to retrieve partial information regarding the neighbors of visited vertices, dRRT consults an oracle described below. We start with several basic definitions. 

Given two points $v,v'\in [0,1]^d$, denote by $\rho(v,v')$ the ray that starts in $v$ and goes through $v'$. Given three points $v,v',v''\in [0,1]^d$, denote by $\angle_v(v',v'')$ the (smaller) angle between $\rho(v,v')$ and $\rho(v,v'')$.

        \begin{definition}[Direction Oracle]
            Given a vertex $v\in V$, and a point $u\in [0,1]^d$ we define
                $$\orad(v,u):=\argmin_{v'}\left\{\angle_v(u,v')|(v,v')\in E\right\}.$$
        \end{definition}
        In other words, the direction oracle returns the neighbor $v'$ of $v$ such that the direction from $v$ to $v'$ is closest to the direction from $v$ to $u$.
    \subsection{Description of dRRT}
        At a high level, dRRT proceeds similar to the RRT algorithm, and we repeat it here for completeness. The dRRT algorithm (Algorithm~\ref{alg:planner}) grows a trees $\T$ which is a subgraphs of $G$ and is rooted in $s$ (line 1). The growth of $\T$ (line~3) is achieved by an expansion towards random samples. Additionally, an attempt to connect $\T$ with $t$ is made (line 4). The algorithm terminates when this operation succeeds and a solution path is generated (line~6), otherwise the algorithm repeats line 2.

        Expansion of $\T$ is performed by the EXPAND operation (Algorithm~\ref{alg:expand}) which performs $N$ iterations that consist of the following steps: A point $q_{\text{rand}}$ is sampled uniformly from $[0,1]^d$ (line~2). Then, a node $q_{\text{near}}$ that is the closest to the sample (in Euclidean distance), is selected (line~3). $q_{\text{near}}$ is extended towards the sample by locating the vertex $q_{\text{new}}\in V$, that is the neighbor of $q_{\text{near}}$ in $G$ in the direction of $q_{\text{rand}}$ (by the direction oracle~$\orad$). Once $q_{\text{new}}$ is found (line~4), it is added to the tree (line~6) with the edge $(q_{\text{near}},q_{\text{new}})$ (line 7). See an illustration of this process in Figure~\ref{fig:dRRT_alg}. This is already different from the standard RRT as we cannot necessarily proceed exactly in the direction of the random point.

        After the expansion, dRRT attempts to connect the tree $\T$ with $t$ using the CONNECT\_TO\_TARGET operation (Algorithm~\ref{alg:connect}). For every vertex $q$ of $\T$, which one of the $K$ nearest neighbors of $t$ in $\T$ (line~1), an attempt is made to connect $q$ to $t$ using the method LOCAL\_CONNECTOR (line~2) which is a crucial part of the dRRT algorithm
(see Subsection~\ref{sub:local}). 

        Finally, given a path from some node $q$ of $\T$ to $t$ the method RETRIEVE\_PATH (Algorithm~\ref{alg:planner}, line~6) returns the concatenation of the path from $s$ to $q$, with~$\Pi$.
        
        \begin{algorithm}[b!]
            \caption{dRRT\_PLANNER ($s,t$)}
            \label{alg:planner}
            \begin{algorithmic}[1]
                \STATE  $\T$.init($s$)
                \LOOP
                \STATE  EXPAND($\T$)
                \STATE  $\Pi \leftarrow$ CONNECT\_TO\_TARGET($\T,t$)
                \IF     {not\_empty($\Pi$)}
                    \RETURN RETRIEVE\_PATH($\T,\Pi$)
                \ENDIF
                \ENDLOOP
            \end{algorithmic}
        \end{algorithm}
        \begin{algorithm}[bh!]
            \caption{EXPAND ($\T$)}
            \label{alg:expand}
            \begin{algorithmic}[1]
                \FOR    {$i = 1 \to N$}
                    \STATE  $q_{\text{rand}}\leftarrow$ RANDOM\_SAMPLE()
                    \STATE  $q_{\text{near}}\leftarrow$
                        NEAREST\_NEIGHBOR($\T,q_{\text{rand}}$)
                    \STATE  $q_{\text{new}}\leftarrow \orad(q_{\text{near}},q_{\text{rand}})$
                    \IF     {$q_{\text{new}} \not\in \T$}
                            \STATE  $\T$.add\_vertex($q_{\text{new}}$)
                            \STATE  $\T$.add\_edge($q_{\text{near}} ,q_{\text{new}}$)
                    \ENDIF
                \ENDFOR
            \end{algorithmic}
        \end{algorithm}
        \begin{algorithm}[bh!]
            \caption{CONNECT\_TO\_TARGET($\T,t,$)}
            \label{alg:connect}
            \begin{algorithmic}[1]
                \FOR    {$q\in$ NEAREST\_NEIGHBORS($\T,t,K$)}
                    \STATE  $\Pi \leftarrow$ LOCAL\_CONNECTOR($q,t$)
                    \IF     {not\_empty($\Pi$)}
                            \RETURN     {$\Pi$}
                    \ENDIF
                \ENDFOR
                \RETURN     $\emptyset$
            \end{algorithmic}
        \end{algorithm}

    \subsection{Local connector}\label{sub:local}
We show in the following subsection that it is possible that $\T$ will eventually reach $t$ during the EXPAND stage, and therefore an application of LOCAL\_CONNECTOR will not be necessary.
However, in practice this is unlikely to occur within a short time frame, especially when $G$ is large. 
Thus, we employ a heavy-duty technique, which given two vertices $q_0,q_1$ of $G$ tries to find a path between them. We mention that it is common to assume in sampling-based algorithms that connecting nearby samples will require less effort than solving the initial problem and here we make a similar assumption.
We assume that a \lp is effective only on \emph{restricted} pathfinding problems, thus in the general case it cannot be applied directly on $s,t$, as it may be highly costly (unless the problem is easy). A concrete example of a \lp is provided in the next section.


    \subsection{Probabilistic completeness of dRRT}\label{sec:complete}
    Recall that an algorithm is \emph{probabilistically complete} if the probability it finds a solution tends to one as the run-time of the algorithm tends to infinity (when such a solution exists). For simplicity, we show that dRRT possesses a stronger property and with high probability will reveal all the vertices of the traversed graph, assuming this graph is connected.

    The proof relies on the assumption that the vertices of the traversed graph $G$ are in \emph{general position}, that is, every pair of distinct vertices are embedded in two distinct points in $\dR^d$, and for every triplet of distinct vertices the points in which they are embedded are non-collinear. 
This issue will be addressed in the following section, where we consider the application of dRRT on a specific type of graphs. The proof does not need to take into consideration the \lp.

    \begin{theorem}\label{thm:complete}
         Let $G=(V,E)$ be a connected graph embedded in $[0,1]^d$ where the vertices are in general position. Then ,with high probability, every vertex of $G$ will be revealed by the dRRT algorithm, given sufficient amount of time.
    \end{theorem}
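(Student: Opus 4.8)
The plan is to prove the stronger statement that, with probability $1$, the tree $\T$ eventually contains \emph{every} vertex of $G$; since \texttt{dRRT\_PLANNER} invokes \texttt{EXPAND} infinitely often and $\T$ only grows, it suffices to analyze a single \texttt{EXPAND} iteration and then sum the resulting probabilities over iterations. The core of the argument is a geometric estimate. Fix an edge $(v,v')\in E$ with $v\in V(\T)$ and $v'\notin V(\T)$, and let $R_{v,v'}\subseteq[0,1]^d$ be the set of points $q$ for which $\texttt{NEAREST\_NEIGHBOR}(\T,q)=v$ and $\orad(v,q)=v'$; if the sample $q_{\mathrm{rand}}$ lands in $R_{v,v'}$ then that iteration adds $v'$ to $\T$. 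I claim there is a constant $p>0$ depending only on $G$ with $\mu(R_{v,v'})\ge p$ for every such edge and every tree $\T\ni v$, where $\mu$ denotes Lebesgue measure. Granting the claim, the proof concludes as follows: as long as $V(\T)\subsetneq V$, connectedness of $G$ forces an edge to leave $V(\T)$, so each \texttt{EXPAND} iteration enlarges $V(\T)$ with probability at least $p$ conditionally on the past (here using that $q_{\mathrm{rand}}$ is uniform and independent of the history).

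To prove the estimate I would set $\delta:=\min\{\|w-w'\| : w,w'\in V,\ w\ne w'\}>0$, which is positive because $V$ is finite and, by general position, the vertices map to distinct points. For $u$ in the open ball $B(v,\delta/2)$ and any other tree vertex $w$ we have $\|u-w\|\ge\delta-\|u-v\|>\|u-v\|$, so $v$ is the unique nearest vertex of $\T$; hence the nearest-neighbor condition holds throughout $B(v,\delta/2)$. For the oracle, write $d_{v,w}:=(w-v)/\|w-v\|$ for each neighbor $w$ of $v$ in $G$; non-collinearity of the triples $v,v',w$ (general position again) gives $d_{v,v'}\ne d_{v,w}$ for all neighbors $w\ne v'$, so $d_{v,v'}$ lies in the \emph{interior} of the open spherical cell $\{\theta\in S^{d-1}:\angle(\theta,d_{v,v'})<\angle(\theta,d_{v,w})\text{ for every }w\ne v'\}$, on which $\orad(v,\cdot)$ returns $v'$. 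Since $v,v'\in[0,1]^d$ and the cube is convex, $d_{v,v'}$ lies in the tangent cone of $[0,1]^d$ at $v$; perturbing $d_{v,v'}$ slightly toward the interior of this tangent cone produces a direction that is simultaneously inside the spherical cell above and inside the interior of the tangent cone, and so does a small neighborhood of it. Therefore the set of $u\in B(v,\delta/2)\cap[0,1]^d$ whose direction $(u-v)/\|u-v\|$ lies in that neighborhood is a full-dimensional truncated cone contained in $R_{v,v'}$, giving $\mu(R_{v,v'})>0$; minimizing over the finitely many edges of $G$ yields a uniform $p>0$. I expect this geometric step to be the main obstacle, in particular verifying that the measure stays positive when $v$ lies on the boundary of $[0,1]^d$ --- which is exactly why convexity of the cube is used to place $d_{v,v'}$ in its tangent cone --- while the remaining steps are routine.

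For the probabilistic conclusion I would enumerate the \texttt{EXPAND} iterations of the whole run as $1,2,3,\dots$ and call a maximal block of iterations during which $|V(\T)|$ stays constant a \emph{phase}. By the estimate together with connectedness, each iteration within a phase ends that phase with probability at least $p$ conditionally on the history, so the length of a phase is stochastically dominated by a geometric random variable with parameter $p$ and is finite almost surely. Since $|V(\T)|$ increases by exactly one whenever a phase ends and is bounded by $|V|$, at most $|V|-1$ phases ever end; being a sum of at most $|V|-1$ almost-surely-finite lengths, the total number of iterations until $V(\T)=V$ is finite almost surely. In particular the probability that dRRT has revealed all vertices of $G$ tends to $1$ as its running time tends to infinity, which is the assertion of the theorem. (Equivalently, the probability of at most $|V|-2$ tree-enlarging iterations among the first $n$ is bounded by $\Pr[\mathrm{Bin}(n,p)\le|V|-2]\to0$.)
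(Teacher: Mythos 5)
Your proposal is correct and follows essentially the same route as the paper: the paper also argues that the sample must land in the intersection of the nearest-neighbor region of $v$ (its Voronoi cell among tree vertices, for which your ball $B(v,\delta/2)$ is an explicit subset) with the angular cell of the direction toward the unvisited neighbor (the paper's Voronoi cell of the ray $\rho(v,v^*)$), and that general position makes this intersection have positive volume. Your write-up is simply more quantitative and careful than the paper's --- uniform constant $p$ over trees and edges, explicit handling of vertices on the boundary of $[0,1]^d$ via the tangent cone, and the geometric-domination argument summing over iterations --- but the underlying Voronoi-bias idea is the same.
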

    \begin{proof}
        Denote by $U$ the set of vertices of $\T$ after the completion of an iteration of the algorithm. Let $v^*\in V\setminus U$ be an unvisited vertex such that there exists $(v,v^*)\in E$, where $v\in U$. We wish to show that the probability that $\T$ will be expanded on the edge $(v,v^*)$, and thus $v^*$ will be added to $U$, is bounded away from zero. For simplicity we assume that there exists a single vertex $v\in U$ that has an edge to~$v^*$.

        Denote by $\vor(v)$ the \emph{Voronoi cell}~\cite{bkos-cg08} of the site $v$, in the Euclidean (standard) Voronoi diagram of point sites, where the sites are the vertices of $U$ (Figure~\ref{fig:dRRT_alg}(b)). In addition, denote by $\vor'(v,v^*)$ the Voronoi cell of $\rho(v,v^*)$, in a Voronoi diagram of the ray sites $\rho(v,v^*),\rho(v,u_1),\ldots,\rho(v,u_j)$, where $u_1,\ldots,u_j$ are the neighbors of $v$ in $\T$, not including $v^*$ (Figure~\ref{fig:dRRT_alg}(c)). 

        Notice that in order to extend $\T$ from $v$ to $v^*$ the random sample $q_{\text{rand}}$ in EXPAND (Algorithm~\ref{alg:expand}) has to fall inside $\vor(v)\cap\vor'(v,v^*)$. Thus, in order to guarantee that $v^*$ will be added to $\T$, with non-zero probability, we show that the shared region between these two cells has non-zero measure, namely $|\vor(v)\cap\vor'(v,v^*)|>0$, where $|\Gamma|$ denotes the volume of $\Gamma$.

        By the general position assumption we can deduce that $|\vor(v)|>0$ and $|\vor'(v,v^*)|>0$. In addition, the intersection between the two cells is clearly non-empty:
        There is a ball with radius $r>0$ whose center is $v$ and is completely contained in $\vor(v)$; similarly, there is a cone of solid angle $\alpha >0$ with apex at $v$ fully contained in $\vor'(v,v^*)$.
        Hence, it holds that $|\vor(v)\cap\vor'(v,v^*)|>0$, otherwise $v$ and $v^*$ are embedded in the same point. \qed
    \end{proof}
    
We note that a more careful analysis can yield an explicit bound on the convergence rate of dRRT. Such a bound may be computed using the size of the smallest cell in the Voronoi diagram of all nodes of~$G$.

\begin{figure*}[bh!]
  \centering
  \subfloat
   [\sf ]
   {
    \includegraphics[width=0.3\textwidth, clip=true, trim=45pt 105pt 115pt 65pt]{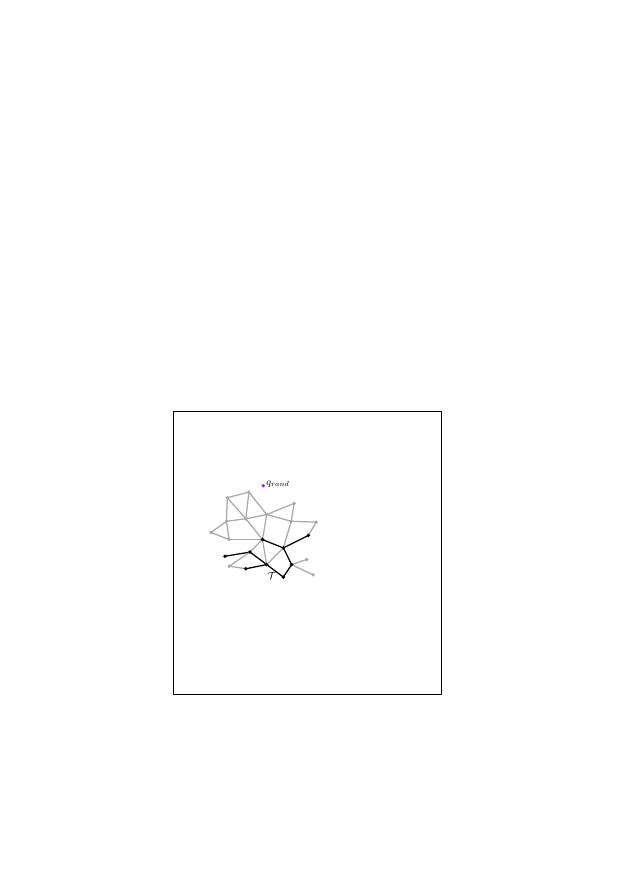}
   }
   \hspace{0.5mm}
  \subfloat
   [\sf ]
   {
    \includegraphics[width=0.3\textwidth, clip=true, trim=45pt 105pt 115pt 65pt]{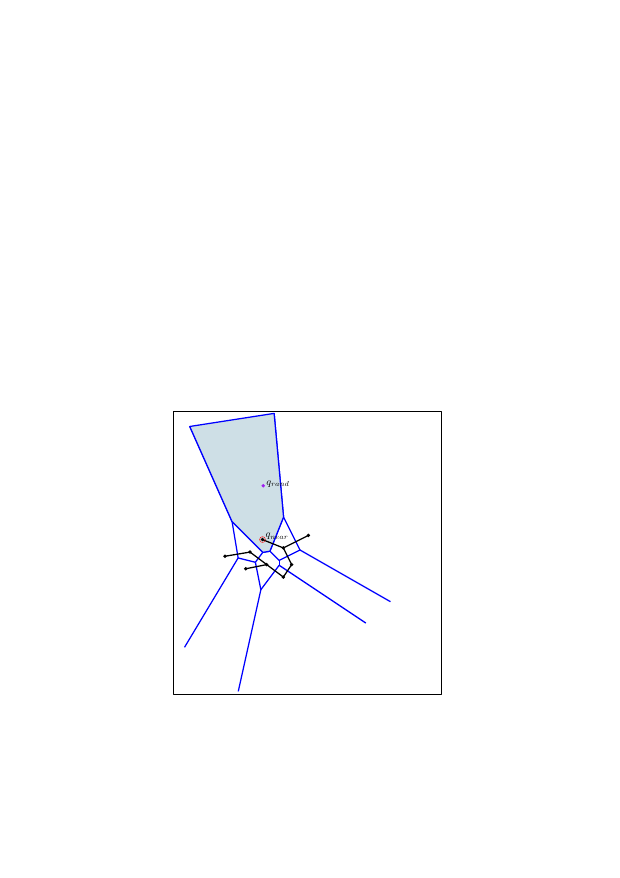}
   }
   \\
   \label{fig:drrt_2}
   \hspace{0.5mm}
  \subfloat
   [\sf ]
   {
   \includegraphics[width=0.3\textwidth, clip=true, trim=45pt 105pt 115pt 65pt]{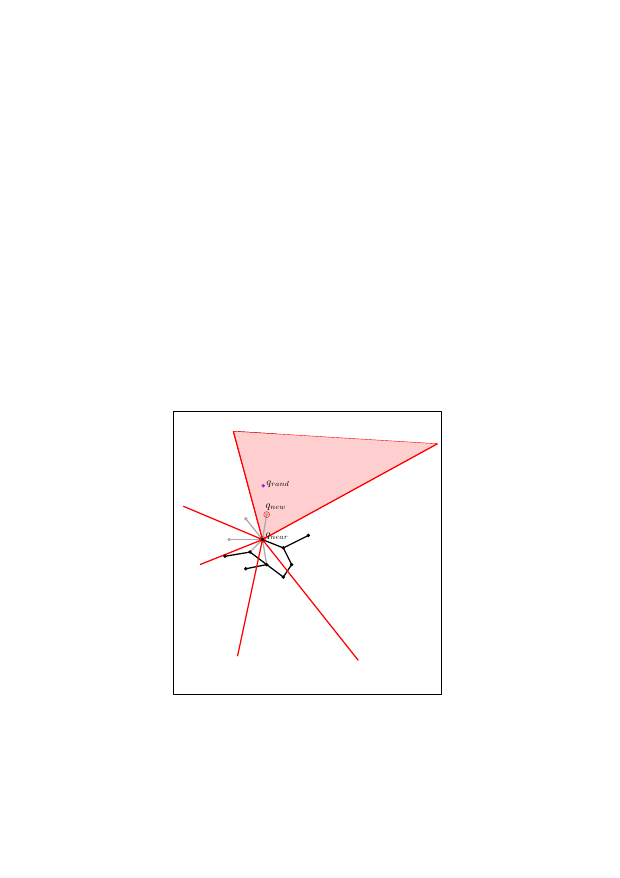}
   }
   \label{fig:drrt_3}
   \hspace{0.5mm}
  \subfloat
   [\sf ]
   {
    \includegraphics[width=0.3\textwidth, clip=true, trim=45pt 105pt 115pt 65pt]{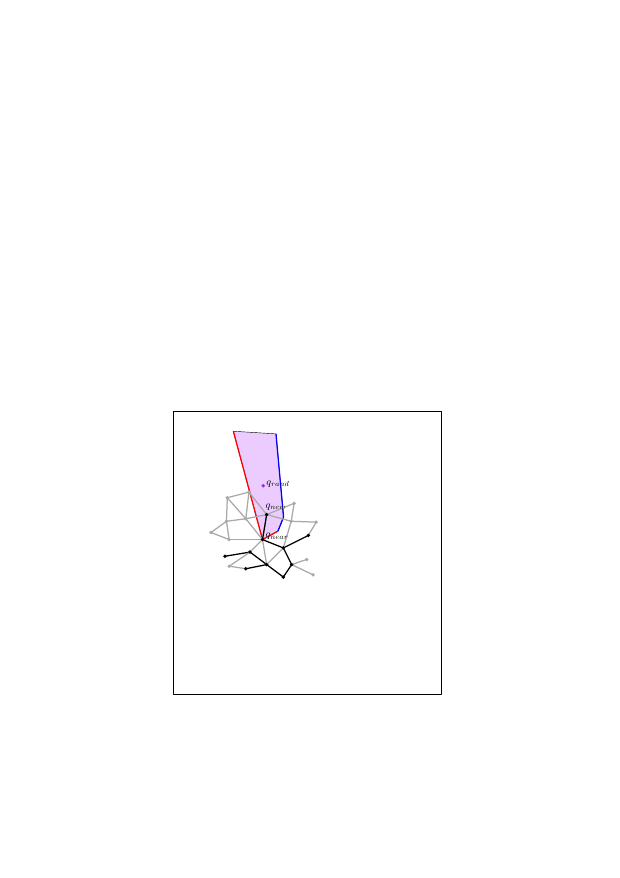}
   }
  \caption{\sf 	An illustration of the expansion step of dRRT. The tree $\T$ is drawn with black vertices and edges, while the gray elements represent the unexplored portion of the graph $G$.
  (a)~A random point $q_{\text{rand}}$ (purple) is drawn uniformly from $[0,1]^d$.
  (b)~The vertex $q_{\text{near}}$ of $\T$ that is the Euclidean nearest neighbor of $q_{\text{rand}}$ is extracted.
  (c)~The neighbor $q_{\text{new}}$ of $q_{\text{near}}$, such that its direction from $q_{\text{near}}$ is the closest to the direction of $q_{\text{rand}}$ from $q_{\text{near}}$, is identified.
  (d)~The new vertex and edge are added to $\T$. \emph{Additional information for Theorem~2}: In~(b) the Voronoi diagram of the vertices of $\T$ is depicted in blue, and the Voronoi cell of $q_{\text{near}}$ ,$\vor(q_{\text{near}})$, is filled with light blue. In~(c) the Voronoi diagram of the rays that leave $q_{\text{near}}$ and pass through its neighbors is depicted in red, and the Voronoi cell of $\rho(q_{\text{near}},q_{\text{new}})$, $\vor'(q_{\text{near}},q_{\text{new}})$, is filled with pink. The purple region in~(d) represents $\vor(q_{\text{near}})\cap\vor'(q_{\text{near}},q_{\text{new}})$.}
  \vspace {-5mm}
  \label{fig:dRRT_alg}
\end{figure*}

\section{Multi-robot motion planning with dRRT}
\label{sec:implementation}
In this section we describe the MRdRRT algorithm. Specifically, we discuss the adaptation of dRRT for pathfinding in a composite roadmap $\dG$, which is embedded in the joint \cs of $m$ robots.
   In particular, we show an implementation of the oracle $\orad$, which relies solely on the representation of $G_1,\ldots, G_m$.
   Additionally, we discuss an  implementation of the \lp component that takes advantage of the fact that $\dG$ represents a set of valid positions and movements of multiple robots.
   Finally, we discuss the probabilistic completeness of our entire approach to multi-robot motion planning.

    \subsection{Oracle $\orad$}
        Recall that given $C\in \dV$ and a random sample  $q$, $\O_D(C,q)$ returns $C'$ such that $C'$ is a neighbor of $C$ in $\dG$, and for every other neighbor $C''$ of $C$, $\rho(C,q)$ forms a smaller angle with $\rho(C,C')$ than with $\rho(C,C'')$, where $\rho$ is as defined in Section~\ref{sec:complete}.

        Denote by $\C(r_i)$ the \cs of $r_i$. Let $q=(q_1,\ldots,q_m)$ where $q_i\in\C(r_i)$, and let $C=(c_1,\ldots,c_m)$ where $c_i\in V_i$. To find a suitable neighbor for $C$ we first find the most suitable neighbor for every individual robot and combine the $m$ single-robot neighbors into a candidate neighbor for $C$. We denote by $c'_i=\orad(c_i,q_i)$ the neighbor of $c_i$ in $G_i$ that is in the direction of $q_i$. Notice that the implementation of the oracle for individual roadmaps is trivial---for example, by traversing all the neighbors of~$c_i$ in $G_i$.
        Let $C'=(c'_1,\ldots,c'_m)$ be a candidate for the result of $\O_D(C,q)$. If $(C,C')$ represents a valid edge in $\dG$, i.e., no robot-robot collision occurs, we return $C'$. Otherwise, $\O_D(C,q)$ returns $\emptyset$. In this case, the new sample is ignored and another sample is drawn in the EXPAND phase (Algorithm~\ref{alg:expand}).

The completeness proof of the dRRT (Theorem~\ref{thm:complete}) 
for this specific implementation of $\orad$, is straightforward. 
Notice that in order to extend $C=(c_1,\ldots,c_m)$ to $C'=(c'_1,\ldots,c'_m)$ the sample $q=(q_1,\ldots,q_m)$ must obey the following restriction: 
For every robot $r_i$, $q_i$ must lie in $\vor(c_i)\cap\vor'(c_i,c'_i)$ 
(where in the original proof we required that $q$ will lie in $\vor(C)\cap\vor'(C,C')$). 
Also note that the points in $\C(r_i)$ are in general position, as required by Theorem~\ref{thm:complete}, since they were uniformly sampled by PRM.

    \subsection{Local connector implementation}
        Recall that in the general dRRT algorithm the \lp is used for connecting two given vertices of a graph. As our \lp we rely on a framework described by van den Berg et al.~\cite{bslm-cppmr}. Given two vertices $\dV=(v_1,\ldots,v_m), \dV'=(v'_1,\ldots,v'_m)$ of $\dG$ we find for each robot $i$ a path $\pi_i$ on $G_i$ from $v_i$ to $v'_i$. The connector attempts to find an ordering of the robots such that robot $i$ does not leave its start position on $\pi_i$ until robots with higher priority reached their target positions on their respective path, and of course that it also avoids collisions. When these robots reach their destination robot $i$ moves along $\pi_i$ from $\pi_i(0)$ to $\pi_i(1)$. During the movement of this robot the other robots stay put.
        
        The priorities are assigned according to the following rule: if moving robot $i$ along $\pi_i$ causes a collision with robot $j$ that is placed in $v_j$ then robot $i$ should move \emph{after} robot $j$. Similarly, if $i$ collides with robot $j$ that is placed in $v'_j$ then robot $i$ should move \emph{before} robot $j$.  This prioritization induces a directed graph $\I$. In case this graph is acyclic we generate a solution according to the prioritization of the robots. Otherwise, we report failure.
        
        We decided to use this simple technique in our experiments due to its low cost, in terms of running time, regardless of whether it succeeds finding a solution or not. We wish to mention that we also experimented with M* with a bounded degree of coupling (to avoid considering exponentially many neighbors) as the local connector in our algorithm. However, the ordering algorithm 
of~\cite{bslm-cppmr} turned out to be considerably more efficient.
        
    \subsection{Probabilistic completeness of MRdRRT}
In order for the motion-planning framework to be probabilistically complete, we still need to show that
(i)~as the number of samples used for each single-robot roadmap tends to infinity, the composite roadmap will contain a path (if such a path exists) and 
(ii)~that the proof of Theorem~\ref{thm:complete} still holds when the size of the graph tends to infinity.
	Indeed, \v{S}vestka and Overmars~\cite{so-cppmr} show that the composite roadmap approach is probabilistically complete when the graph-search algorithm is complete.
However, in our setting, the graph-search algorithm is only probabilistically complete and the proof may need to be refined as the size of each Voronoi cell tends to zero.

We note that as the composite roadmap is finite, it is easy to modify the dRRT algorithm such that it will be complete.
This may be done by keeping a list of exposed nodes that still have unexposed edges. 
At the end of every iteration of the main loop of dRRT (Algorithm~\ref{alg:planner}, line 2) one node is picked from the list and one of its unexposed edges is exposed (finding an unexposed edge is done in a brute force manner). Although the above modification ensures completeness of dRRT and hence probabilistic completeness of MRdRRT, we are currently looking for an alternative proof that does not require 
altering the dRRT algorithm.

\section{Experimental results}
\label{sec:experimental_results}

We implemented MRdRRT for the case of polyhedral robots translating and rotating among polyhedral obstacles (see Figure~\ref{fig:3d_scenarios}). We compared the performance of MRdRRT with RRT and an improved (recursive) version of M* that appears in~\cite{wc-mstar}. To make the comparison as equitable as possible, as dRRT does not take into consideration the quality of the solution, we use the  \emph{inflated} version of M*~\cite{wc-mstar} with relaxed optimality guarantees.\vspace{5pt}

\noindent \textbf{Implementation details.} The algorithms were implemented in \Cpp.
The experiments were conducted on a laptop with an Intel i5-3230M 2.60GHz processor with 16GB of memory, running 64-bit Windows 7.
We implemented a generic framework for multi-robot motion planning based on composite roadmaps.
The implementation relies on 
PQP~\cite{pqp} for collision detection,
and performs nearest-neighbor queries
using the Fast Library for Approximate Nearest Neighbors (FLANN)~\cite{muja_flann_2009}.
Metrics, sampling and interpolation in the 3D environments followed the guidelines presented by Kuffner~\cite{Kuffner04}. To eliminate the dependence of dRRT on parameters we assigned them according to the number of iterations the algorithm performed so far, i.e., the number of times that the main loop has been repeated. Specifically, in the $i$'th iteration each EXPAND~(Algorithm~\ref{alg:expand}) call performs $2^i$ iterations ($N=2^i$), while CONNECT\_TO\_TARGET uses $K=i$ candidates that are connected with $t$.

\noindent \textbf{Test scenarios.} We report in Table~\ref{tbl:coupled_scenarios} the running times of M* and dRRT for the scenarios. The first three scenarios are especially challenging as they consist of a large number of robots, and require a substantial amount of coordination between them. 
The fourth scenario (``Home'') is more relaxed and consists of only five robots and requires little coordination. 

We ran each of the three algorithms 10 times on each scenario.
RRT proved incapable of solving any of the test scenarios, running for
several tens of minutes until terminating due to exceeding the memory limits.
We believe that RRT as-is is not suitable for high-dimensional, coupled, multi-robot motion planning.
M* exhibited slightly better performance.
For the first three scenraios, which involve multiple robots and require a substantial amount of coordination,
it never exceeded a success rate of 40\%.
In particular, it often ran out of memory or ran for a very long duration
(we terminated it if its running time exceeded ten times the running time of MRdRRT).
On the other hand, MRdRRT was stable in its results and managed to solve
all the scenarios for each of the 10 attempts.
When M* did manage to solve one of the first three scenarios,
it explored between 2.5 to 10 times the number of vertices that dRRT explored.
For the fourth scenario the results of MRdRRT and M*
were comparable and in general we found M* more suitable for situations where
only a small number of robots have to interact at any given time. We  mention that MRdRRT was unable to solve scenarios that consist of a substantially larger number of robot than we used in our experiments. We believe that it would be beneficial to consider a stronger \emph{local connector} in such cases.

\begin{table*}[t,b]
\begin{center}
	\begin{tabular}{c||c|c||c|c|c||c|c|c|c|c|}
\cline{2-11} & \multicolumn{2}{ c|| }{PRM} & \multicolumn{3}{c||}{M*} & \multicolumn{5}{ c| }{MRdRRT} \\ 
\cline{1-11} \multicolumn{1}{|c||}{\multirow{2}{*}{scenario}} & \multicolumn{1}{c|}{\multirow{2}{*}{$n$}} & \multicolumn{1}{c||}{\multirow{2}{*}{time}} & visited  & total  & success  & visited  & connect & expand & total & success \\
 \multicolumn{1}{ |c||  }{}  & \multicolumn{1}{ c | }{} & \multicolumn{1}{ c || }{} & vertices & time & rate & vertices &  time &  time & time & rate\\
 \cline{1-11} 
 \multicolumn{1}{|c||  }{Twisty}  & 8k & 10s & DNF  & DNF  & 0\% & 8k & 3.3s & 6.7s & 11s & 100\%\\
 \multicolumn{1}{|c||  }{Abstract}& 10k & 24.8s & 300k & 267s & 30\% & 34k & 30.4s & 25.5s & 55.9s & 100\%\\
 \multicolumn{1}{|c||  }{Cubicles} & 10k & 16.2s & 27k & 31s & 40\% & 12k & 16.3s & 36.8s & 53.1s & 100\%\\
 \multicolumn{1}{|c||  }{Home}& 5k & 10.1s & 2k & 3.9s  & 100\% & 8k & 1.5s & 2.9s & 4.4s & 100\%\\
 \cline{1-11} 
\end{tabular}
\end{center}

\caption{\sf  Results for M* and MRdRRT on the scenarios depicted
in Figure~\ref{fig:3d_scenarios}. We first report the number of vertices (reported in thousands) used in the construction of the single-robot PRM roadmaps and the elapsed time (all times reported are in seconds). Then we report the number of visited vertices, the total running time, and the success rate of M*. A similar report is given for dRRT, but we also specify the duration of the connection phase (using \lp) and the expansion phase. The running times and the amount of explored vertices are averaged over the number of successful attempts.}
\vspace {-9mm}
\label{tbl:coupled_scenarios}
\end{table*}

\section{Discussion}\label{sec:discussion}
In this section we state the benefits of MRdRRT, which consists of an implicitly represented roadmaps for multi-robot motion planning combined with an efficient approach for pathfinding for such roadmaps.

Recall that the implicitly-represented composite roadmap $\dG$ results from a tensor product of $m$ PRM roadmaps $G_1,\ldots,G_m$. The reliance on the precomputed individual roadmaps eliminates the need to perform additional collision checking between robots and obstacles while querying $\dG$. This has a substantial impact on the performance of MRdRRT as it is often the case that checking whether $m$ robots collide with obstacles is much more costly than checking whether the $m$ robots collide between themselves. This is in contrast with more naive approaches,
such as RRT which consider the group of robots as one
large robot.
In such cases, checking whether a configuration
(or an edge) is collision free requires
checking for the two types of collisions simultaneously.

The M* algorithm, which also uses the underlying structure of~$\dG$, performs very well in situations where only a small subset of the robots need to coordinate. In these situations it can cope, almost effortlessly, with several tens of robots while outperforming our framework. However, in scenarios where a substantial amount of coordination is required between the robots M* suffers from a disadvantage, since it is forced to consider exponentially many neighbors when performing the search on $\dG$. 
In contrast, dRRT performs a ``minimalistic'' search and advances in small steps, little by little, regardless of the difficulty of the problem at hand. Moreover, dRRT strives to reach unknown regions in $\dG$ while avoiding spending too much time in the exploration of regions that are in the vicinity of explored vertices. This is done via the Voronoi bias, as shown in the proof of Theorem~\ref{thm:complete}. This is extremely beneficial when working on $\dG$ since it contains vertices which represent essentially the same conformation of the robots, and thus considering many vertices within a small region would not lead to a better understanding of the problem at hand. To justify this claim, consider the following example. Suppose that for every robot $i$, $v_i$ is a vertex of $V_i$ that has $k$ neighbors in $G_i$ at distance at most $\varepsilon$. Then the vertex $(v_1,...,v_m)\in \dV$ might have as much as $k^m$ neighbors that are at distance
at most $\varepsilon\sqrt{m}$ in $\dG$.

\section{Future work}
\label{sec:future}
\noindent\textbf{Towards optimality.}
    Currently, our algorithmic framework is concerned with finding \emph{some} solution. Our immediate future goal is to modify it to provide a solution with quality guarantees, possibly by taking an approach similar to the continuous RRT* algorithm~\cite{KF11}, which is known to be asymptotically optimal. A fundamental difference between RRT* and the original formulation of RRT is in a rewiring step, where the structure of the tree is revised to improve previously examined paths. Specifically, when a new node is added to the tree, it is checked as to whether it will be more beneficial for some of the existing nodes to point to the new vertex instead of their current parent in the tree. This can be adapted, to some extent, to the discrete case, although it is not clear whether this indeed will lead to optimal paths.

\vspace{3pt}
\noindent\textbf{dRRT in other settings of motion planning.}
In this paper we combined the dRRT algorithm with implicit composite roadmaps to provide an efficient algorithm for multi-robot motion planning. One of the benefits of our framework comes from the fact that it reuses some of the already computed information to avoid performing costly operations.    
In particular, it refrains from checking collisions between robots with obstacles by forcing the individual robots to move on precalculated individual roadmaps (i.e., $G_i$). We believe that a similar approach can be used in other settings of motion planning. In particular, we are currently working on a dRRT-based approach for motion planning of a multi-linked robot. The new approach generates an implicitly-represented roadmap, which encapsulates information on configurations and paths between configuration that do not induce self-intersections of the robot, while ignoring the existence of obstacles. Then, we overlay this roadmap on the workspace,  an operation which invalidates some of the nodes and edges of the roadmap. Thus, we know only which configurations are self-collision free, but not obstacles collision-free. Then we use dRRT for pathfinding on the new roadmap, while avoiding self-collision tests and while exploring a small portion of the infinite roadmap.

\section{Acknowledgements}\label{sec:ack}

We wish to thank
Glenn Wagner for advising on the M* algorithm and
Ariel Felner for advice regarding pathfinding algorithms on graphs.
We note that the title ``Finding a Needle in an Exponential Haystack'' has been previously used in a talk by Joel Spencer in a different context.


\end{document}